\pdfoutput=1

\documentclass[11pt]{article}

\usepackage[]{EMNLP2023}

\usepackage{times}
\usepackage{latexsym}

\usepackage[export]{adjustbox}
\usepackage{amsmath}
\usepackage{amssymb}
\usepackage{amsthm}

\usepackage{booktabs}
\usepackage{todonotes}

\usepackage[scaled=0.8]{beramono}
\usepackage[frozencache,cachedir=.]{minted}

\usepackage[smallerops,varbb]{newtxmath}
\usepackage{esint}
\DeclareMathAlphabet{\mathcal}{OMS}{cmsy}{m}{n}
\DeclareMathAlphabet{\mathbb}{U}{msb}{m}{n}
\DeclareMathAlphabet{\mathsf}{T1}{cmss}{b}{n}

\usepackage{enumitem}
\setlist[itemize]{align=parleft,left=0.5pt,itemsep=0.25pt,topsep=4pt}
\setlist[enumerate]{align=parleft,left=0.5pt,itemsep=0.25pt,topsep=4pt}

\usepackage[T1]{fontenc}

\usepackage[utf8]{inputenc}

\usepackage{microtype}

\usepackage[english]{babel}
\newtheorem{definition}{Definition}
\newtheorem{lemma}{Lemma}

\definecolor{codebg}{rgb}{0.97,0.97,0.97}

\newcommand{\precision}{\mathsf{P}}
\newcommand{\recall}{\mathsf{R}}
\newcommand{\dice}{\mathsf{F}}
\newcommand{\jaccard}{\mathsf{J}}

\newcommand{\jhu}{\textsuperscript{\rm 1}}
\newcommand{\ur}{\textsuperscript{\rm 2}}
\newcommand{\mssm}{\textsuperscript{\rm 3}}
\newcommand{\cofirst}{\textsuperscript{$\ast$}}
\newcommand*{\courier}{\fontfamily{pcr}\selectfont}

\newcommand{\card}[1]{\left|#1\right|}

%
%

\title{A Unified View of Evaluation Metrics for Structured Prediction}


\renewcommand{\thefootnote}{$\text{*}$} 

\author{
  \bf Yunmo Chen\jhu\footnotemark{} \quad\quad William Gantt\ur\cofirst \quad\quad Tongfei Chen\mssm\cofirst \\
  \bf Aaron Steven White\ur \quad\quad Benjamin Van Durme\jhu \\
  \jhu~Johns Hopkins University \quad \ur~University of Rochester\quad \mssm~Microsoft \\
  \courier{\small\{yunmo|vandurme\}@jhu.edu, \{wgantt@ur.|aaron.white@\}rochester.edu, tongfei@pm.me}
}

\begin{document}

\maketitle
\begin{abstract}
\footnotetext{~Equal contribution.}
\renewcommand{\thefootnote}{\arabic{footnote}}

We present a conceptual framework that unifies a variety of evaluation metrics for different structured prediction tasks (e.g.\ event and relation extraction, syntactic and semantic parsing).
Our framework requires representing the outputs of these tasks as objects of certain data types, and derives metrics through \emph{matching of common substructures}, possibly followed by \emph{normalization}.
We demonstrate how commonly used metrics for a number of tasks can be succinctly expressed by this framework, and show that new metrics can be naturally derived in a bottom-up way based on an output structure. We release a library that enables this derivation to create new metrics.\footnote{~\url{https://github.com/wanmok/metametric}.} Finally, we consider how specific characteristics of tasks motivate metric design decisions, and suggest possible modifications to existing metrics in line with those motivations. 

\end{abstract}
\renewcommand{\thefootnote}{\arabic{footnote}}

\section{Introduction}
\label{sec:introduction}
A wide range of tasks in NLP can be considered as forms of \emph{structured prediction}. Syntactic and semantic parsing produces a tree or graph\footnote{~Often in the form of a \emph{directed acyclic graph} (DAG), as in the task of AMR parsing.} based on text. 
Information extraction (IE) aims to produce structured representations of data extracted from unstructured sources, often in the form of relations that may be used to populate a database \citep{grishman2019twenty}. Such relations may be typed or untyped, may have different numbers of arguments, and may relate objects of different kinds (e.g. mentions, entities, events, or even images). 

The structural complexity of these representations varies considerably between tasks. On the simpler end, problems like \emph{binary relation extraction} require identifying relationships between pairs of entity mentions. On the more complex end are tasks like \emph{template extraction}, which requires populating various types of slots with \emph{sets} of mentions, categorical values, or even whole event structures, and \emph{AMR parsing} \citep{langkilde-knight-1998-generation-exploits, banarescu-etal-2013-abstract}, which requires generating a DAG of entities and values representing their semantic relations.

A wide array of evaluation metrics have been proposed across this spectrum of tasks. For simpler ones, researchers have generally converged to a standardized set of metrics (e.g.\ trigger and argument $\rm F_1$ for event extraction). However, for more complex tasks like template extraction, researchers have often proposed bespoke metrics tailored to the problem at hand, complicating comparison with prior work on similar problems \citep{chinchor-1991-muc, chinchor-1992-muc, du-etal-2021-template, chen-etal-2023-iterative}.

\begin{figure}[t!]
    \includegraphics[width=\linewidth]{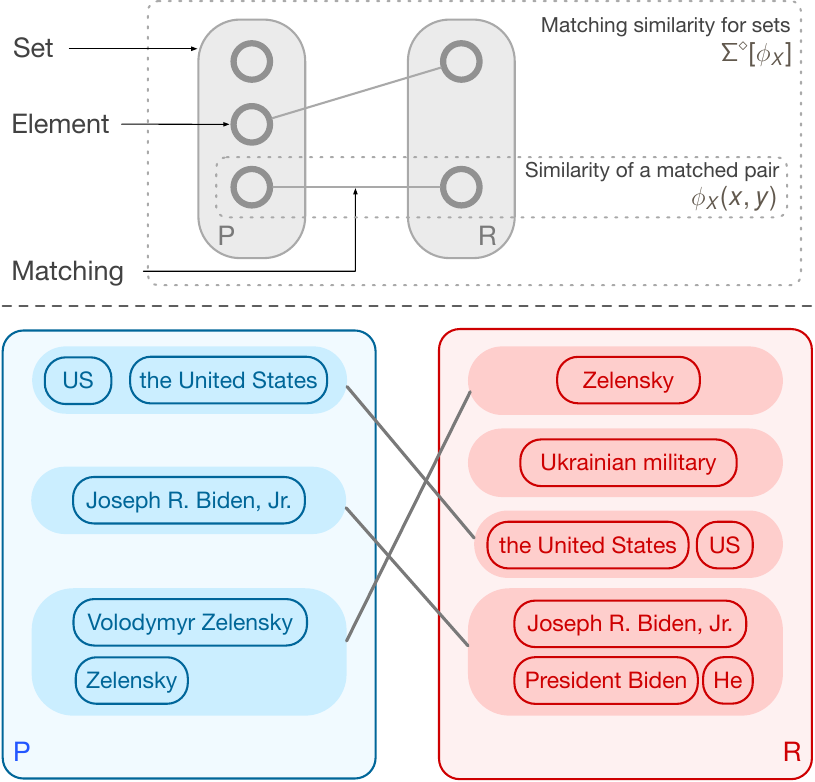}
    \caption{Our generic framework, with the ${\rm CEAF}_{\phi_4}$ metric
    \citep{luo-2005-coreference} as for coreference resolution as an example. Here the task output is a set of entities, where each entity is a set of coreferent mentions identfied in the document. Computing ${\rm CEAF}_{\phi_4}$ thus amounts to calculating the matching similarity between the predicted ($P$) and reference ($R$) sets of entities.}
    \label{fig:framework}
    \vspace{-5mm}
\end{figure}

 \begin{figure*}[t!]
    \centering
    \includegraphics[width=0.85\linewidth]{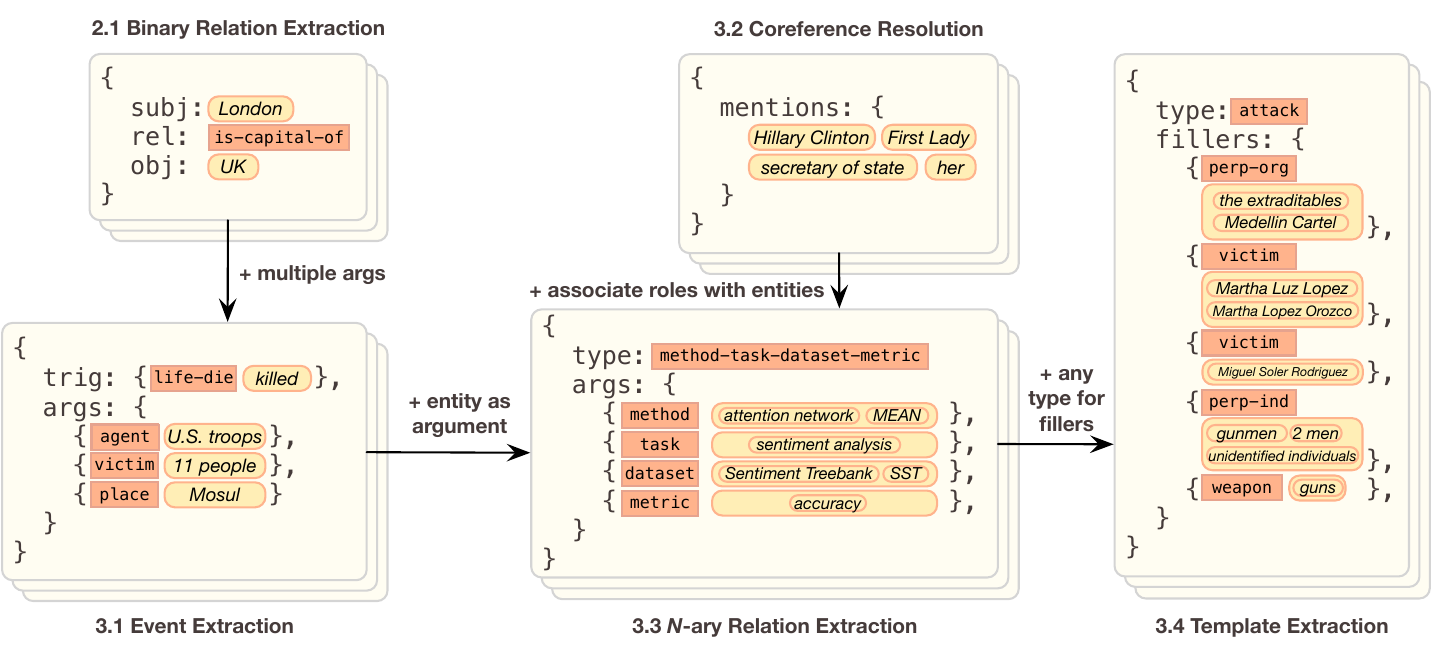}
    \caption{Output structure of common IE tasks discussed in this paper, with examples of their outputs. }
    \label{fig:tasks}
\end{figure*}

Given the common goal of predicting structured objects, our aim is to present a similarly unified, high-level picture of evaluation. We observe that a variety of metrics can be viewed as computing scores over a \textbf{\emph{matching between substructures}} of predicted and reference objects, where this score decomposes as a \emph{normalized sum over matched pairs}. The process of computing metrics can thus be abstracted to a framework as shown in \autoref{fig:framework}.

On the one hand, this observation drives a contribution to structured prediction \emph{theory}, clarifying the relationships among numerous metrics proposed over the years by identifying their core components. On the other, it drives a contribution to NLP \emph{practice}, offering a bottom-up process for designing \emph{new} metrics based on an output structure.

\noindent Our contributions can be summarized as follows:
\begin{itemize}
    \item We present a \emph{unified framework} for expressing structured prediction metrics;
    \item We demonstrate how to derive various classic metrics using this framework, given a specification of a task's output structure;
    \item We consider how different problem features may recommend particular design decisions within the framework --- often different decisions from those realized by existing metrics;
    \item We release a library that enables bottom-up creation of new metrics based on predefined output data structure of a given task.
\end{itemize}
Throughout, we emphasize both how evaluation of substructures (e.g.\ mentions) composes in the evaluation of superstructures (e.g.\ relations, templates), as well as the different notions of similarity employed for different structures. Our discussion starts with simpler tasks and proceeds to more complex ones, interleaving with examples throughout our exposition.

\section{Records and Sets}
\label{sec:record}
We begin by focusing on records\footnote{~In the context of relational databases, a \emph{record} is a \emph{row} (also \emph{named tuple}) describing structured data in a table.} with non-nested, fixed-named \emph{fields} or \emph{slots}. Specifically, for $P, R \in X$ of \textit{Predicted} and \textit{Reference} objects of record type $X$, we induce a \emph{similarity function} over $X$.
\begin{definition}
 A \textbf{similarity} over $X$ is a function $\phi\colon X \times X \to [0, 1]$ such that $\forall x, y \in X$, $\phi(x, y) \leq \phi(x, x) = 1$, i.e.\ an object is at least as similar to itself as to any other. A relaxed version is an \textbf{unnormalized} similarity, where $\phi\colon X \times X \to [0, +\infty)$.
\end{definition}

\paragraph{Discrete Similarity\footnote{~Akin to \emph{discrete metric} and \emph{discrete topology}. Throughout this work we use the word \emph{metric} as it's commonly used in NLP literature: a score for evaluation purposes, rather than the formal mathematical notion that generalizes \emph{distances}.}}
Equality is a trivial but important notion of similarity, which can be expressed by the Kronecker delta or the Iverson bracket\footnote{~The Iverson bracket $\llbracket p\rrbracket$ is 1 if $p$ is true; otherwise 0.}  as
\begin{equation}\label{eq:delta}
\delta_X(x, y) = \llbracket x = y \rrbracket = \begin{cases}
  1 \quad& \textrm{if~}x = y; \\
  0 \quad& \textrm{if~}x \ne y.
\end{cases}
\end{equation}

\paragraph{Product Similarity for Records}
Given two similarities $\phi$ and $\psi$ over sets $X$ and $Y$, we can define a \emph{\textbf{product similarity}} $\phi \times \psi$ for tuples of $X \times Y$: 
\begin{equation}
(\phi \times \psi)\left((x, y), (x^\prime, y^\prime)\right) = \phi(x, x^\prime) \cdot \psi(y, y^\prime)
\end{equation}

\noindent Clearly, the product similarity of two similarities is also a similarity.\footnote{~If at least one similarity in the product is unnormalized, the result is also unnormalized.} This generalizes to $n$-tuples, or record/class types\footnote{~\emph{Product types} in programming languages literature.} if a similarity function is defined for each field in the record. 

\paragraph{Set Intersection and Normalization} 
Sets are commonly compared with Jaccard similarity, or $\rm F_1$ score. Note that the core of such comparison is the \textbf{overlap} between two sets $P, R \subseteq X$, namely 
\begin{equation}
\Sigma_\delta(P, R) = \left| P \cap R \right|
\end{equation}
if we consider the elements of $X$ as discrete (using $\delta_X$ as their similarity). This overlap score $\Sigma_\delta$ is an \emph{unnormalized} similarity under our definition.

There are multiple ways to \emph{normalize} this $\Sigma$ score so that the result is a (proper) similarity. We consider a few common choices: precision (Eq. \ref{eqn:prec}), recall (Eq. \ref{eqn:rec}), and $\rm F_1$ (or \emph{Dice score}; Eq. \ref{eqn:f}):
\begin{eqnarray}
    p = \precision(P, R) &= \displaystyle\frac{\left| P \cap R \right|}{\left| P \right|} &= \frac{\Sigma(P, R)}{\Sigma(P, P)}; \label{eqn:prec} \\
    r = \recall(P, R) &= \displaystyle\frac{\left| P \cap R \right|}{\left| R \right|} &= \frac{\Sigma(P, R)}{\Sigma(R, R)} \label{eqn:rec};  \\
    \dice(P, R) &= \displaystyle\frac{2pr}{p+r}; &  \label{eqn:f}
\end{eqnarray}
\noindent And the Jaccard similarity:
\begin{align}
    \jaccard(P, R) &= \displaystyle\frac{\left| P \cap R \right|}{\left| P \cup R \right|} \nonumber \\
    &= \displaystyle\frac{\Sigma(P, R)}{\Sigma(P, P) + \Sigma(R, R) - \Sigma(P, R)}.
\end{align}

\noindent Note that all these normalizers can be expressed solely with the \emph{overlap scoring function} $\Sigma$. Let $\mathsf{N} \in \{\precision, \recall, \dice, \jaccard\}$ be a normalizer over objects of type $X$. 
Hence we arrive at a normalized similarity over sets of $X$: $\mathsf{N}[\delta](P, R) = \mathsf{N}(\Sigma_\delta(P, R))$.

We have created the basic tools needed to derive metrics for many simple tasks. Next, we illustrate how to do so for two common NLP tasks.

\subsection{Binary Relation Extraction}
\label{subsec:bre}
Binary relation extraction (RE) focuses on typed relations (e.g. \textsc{is-capital-of}) with two arguments, a \emph{subject} and an \emph{object}. Traditionally, both the subject and the object are text spans (i.e.\ \emph{mentions}). Given a text passage, the objective is to output a set of binary relations.

To ground our discussion of concrete structured prediction tasks, we specify  relevant output data structure(s) in a Python dataclass-like syntax. For binary RE, these are as follows:
\begin{minted}[bgcolor=codebg,mathescape,fontsize=\footnotesize]{python}
class Mention:
  left: int   # left span offset (inclusive)
  right: int  # right span offset (inclusive)
  
class Relation:
  type: RelationType  # is-capital-of
  subj: Mention       # London
  obj: Mention        # United Kingdom

class RelationSet:    # task output
  relations: Set[Relation]
\end{minted}
We will now derive a metric bottom-up. A standard similarity for mentions is exact offset match\footnote{~For consistent presentation, we define $\phi_\texttt{Mention}$ in terms of offsets, but other string similarities could be substituted w.l.o.g., e.g.\ based on string value of the tokens (e.g.\ bag-of-tokens $\rm F_1$ employed in MRC/QA \citep{rajpurkar-etal-2016-squad}).}, where two mentions are considered the same if and only if both the left and right boundaries match. This is an instance of product similarity:\footnote{~Given a class \texttt{Cls} with \texttt{fld} as a field, we write $\phi_{\texttt{Cls.fld}}(x, y)$ (or $\phi_{\texttt{fld}}$ if it is not ambiguous) where $x, y \in \texttt{Cls}$ to mean $\phi_{\texttt{Cls.fld}}(x, y) = \phi(x.\texttt{fld}, y.\texttt{fld})$.}
\begin{equation}\label{eqn:phi-mention}
    \phi_{\texttt{Mention}} = \delta_{\texttt{left}} \times \delta_{\texttt{right}}
\end{equation}
On the outer level, relation instances are considered correct only when all of its components are correct:
\begin{equation}
    \phi_{\texttt{Relation}} = \delta_{\texttt{type}} \times \delta_{\texttt{subj}} \times \delta_{\texttt{obj}}
\end{equation}
Finally, precision, recall, and $\rm F_1$ score are the most common metrics to evaluate predicted relations. Practically, this requires finding the intersection between predicted and reference relations:\footnote{~For concision, we present only $\dice$ in our metric definitions, but precision and recall are defined analogously, substituting $\precision$ or $\recall$ for $\dice$ as appropriate.}
\begin{equation}
    \mathrm{RelF_1} = \phi_{\texttt{RelationSet}} = \dice_{\texttt{relations}}[\phi_{\texttt{Relation}}] \label{eqn:bre-f1}
\end{equation}

\subsection{Dependency Parsing}
Our next example is dependency parsing, where dependencies are relations between a \emph{governor} and its \emph{dependent}. The output structure is as follows:%
\begin{minted}[bgcolor=codebg,mathescape,fontsize=\footnotesize]{python}
class Dependency:
  gov: int
  dep: int  # index of the word
  rel: DependencyType   # nsubj, advmod, ...

class DependencyParse:  # task output
  edges: Set[Dependency]
\end{minted}
Dependency parsing is evaluated using unlabeled (UAS) and labeled (LAS) attachment scores \cite{buchholz-marsi-2006-conll}, which are simply $\rm F_1$ scores over dependency edges:
\begin{align}
  \mathrm{UAS} &= \dice_{\texttt{edges}}\left[ \delta_\texttt{gov} \times \delta_\texttt{dep} \right] \\
  \mathrm{LAS} &= \dice_{\texttt{edges}}\left[ \delta_\texttt{gov} \times \delta_\texttt{dep} \times \delta_\texttt{rel} \right]
\end{align}

\section{Matching of Sets}
\label{sec:sets}

In the previous section, we derived $\Sigma_\delta$, a similarity for sets whose elements are \emph{discrete}. However, elements of sets can be equipped with their own similarity. For example, in coreference resolution, the output of a system is a \emph{set} of \emph{entities}, where each entity is in turn a \emph{set} of \emph{mentions} that may \emph{partially} overlap. We develop the idea of a \emph{matching of sets} to express these cases.

We derive a similarity $\phi_{\mathcal{P}(X)}$ over \emph{sets} of elements of $X$ (i.e.\ elements of the power set $\mathcal{P}(X)$) using \emph{bipartite graphs}. Assuming that elements in $X$ are compared with a custom similarity $\phi_X$, given two sets $P, R \subseteq X$, we can construct a bipartite similarity graph $G = (P, R, E)$ between $P$ and $R$, where $E  \subseteq P \times R $ is the set of edges, and the weight on each edge $\phi_X(u,v)$ corresponds to the value of the similarity ($\phi_X$) between nodes $u$ and $v$.

We then determine a \emph{matching} $M^\diamond \subseteq E$ on this bipartite graph $G$. An unnormalized \textbf{matching score} between $P$ and $R$ is defined to be the maximum sum of weights of all edges in a matching, subject to some constraint:
\begin{equation}\label{eq:matching-unnorm}
    \Sigma^\diamond[\phi_X](P, R) = \max_{M^\diamond} \sum_{(u, v) \in M^\diamond} \phi_X(u, v),
\end{equation}
\noindent where  $\diamond \in \{\leftrightarrow,\rightarrow,\leftarrow,\sim\}$ is the \textbf{matching constraint}. Specifically we have the following:
\begin{itemize}
    \item \textbf{1:1 ($\leftrightarrow$):} Each element of $P$ can be matched to at most one element of $R$, and vice versa. This is corresponds to the \emph{unbalanced assignment problem}, and can be solved efficiently with the Hungarian algorithm \citep{kuhn1955hungarian, munkres1957algorithms}. We denote this $M^{\leftrightarrow}$ since the matching is a (partial) bijection between $P$ and $R$.
    \item \textbf{\emph{N}:1 ($\to$) / 1:\emph{N} ($\leftarrow$):} Each element of $P$ can be matched to at most one element of $R$, but each element of $R$ can be matched to multiple elements of $P$. We denote this $M^{\to}$ since the matching is a (partial) function from $P$ to $R$. A flipped version $M^{\leftarrow}$ obviously follows.
    \item \textbf{\emph{N}:\emph{N} ($\sim$)}: Every element of $P$ may be matched with multiple elements of $R$, and vice versa, without constraints. We denote this $M^{\sim} = E$, as the matching may be any  relation between $P$ and $R$.
\end{itemize}

Note that the \emph{overlap score} developed in \S\ref{sec:record} is a special case of the 1:1 \emph{matching score} here, since 
\begin{equation}
    \Sigma_\delta(P, R) = \left| P \cap R \right| = \Sigma^{\leftrightarrow}[\delta](P, R).
\end{equation}
Thus we arrived at a generalization of our original overlap score.
We denote the $\mathsf{N}$-normalized ($\mathsf{N} \in \{\precision, \recall, \dice, \jaccard\}$) matching score $\Sigma^\diamond[\phi_X]$ simply as $\mathsf{N}^{\leftrightarrow}[\phi_X]$. Such (normalized) matching scores are sometimes \emph{kernels}, which have additional nice properties. For discussion, see \autoref{app:kernel}.

With the notion of matching of sets, we next consider metrics for several more complex tasks.

\subsection{Event Extraction}
\label{subsec:ee}
Our first such task is event extraction.\footnote{~This also covers \emph{semantic role labeling} (SRL), which is evaluated in the same way, and \emph{event argument extraction} (EAE), which differs only in considering arguments occurring outside the sentence containing the trigger.} We imagine that events and arguments are represented using the following data structures:
\begin{minted}[bgcolor=codebg,mathescape,fontsize=\footnotesize]{python}
class Trigger:
  mention: Mention  # defined in $\S\ref{subsec:bre}$
  type: EventType

class Argument:
  mention: Mention
  role: RoleType
  
class Event:
  trig: Trigger
  args: Set[Argument]

class EventSet:  # task output
  events: Set[Event]
\end{minted}
The canonical metrics for event extraction are labeled precision, recall, and $\rm F_1$ score for both event triggers and arguments \cite[\textit{i.a.}]{li-etal-2013-joint}. An event trigger is considered correct iff both the event type and the trigger mention offsets exactly match those of the reference (i.e.\ $\delta_{\texttt{Trigger}} = \delta_{\texttt{mention}} \times \delta_{\texttt{type}}$). An event argument is considered correct iff the argument mention offsets and role exactly match the reference (i.e.\ $\delta_{\texttt{Argument}} = \delta_\texttt{mention} \times \delta_\texttt{role}$) \emph{and} the associated trigger is correct.\footnote{~Unlabeled scores, in which event type and argument role are ignored, are also commonly reported.} Given these, we can express trigger and argument $\rm F_1$ scores as
\begin{align}
    \mathrm{TrigF_1} &= \dice^{\leftrightarrow}_{\texttt{events}}\left[\delta_{\texttt{trig}} \right]; \\
    \mathrm{ArgF_1} &= \dice^{\leftrightarrow}_{\texttt{events}}
    \left[\delta_{\texttt{trig}} \times\Sigma_{\texttt{args}}^{\leftrightarrow}[ \delta_{\texttt{Argument}}] \right].
\end{align}

Note that the definition of $\mathrm{ArgF_1}$ suggests that the metric can be viewed as a \emph{nested} matching, in which we first compute an \emph{unnormalized} optimal argument matching score ($\Sigma_{\texttt{args}}^{\leftrightarrow}$, i.e., a raw count of matched arguments) based only on role type and argument boundaries, and then use this score to identify the optimal matching and score conditioned on the trigger. As with $\dice^{\leftrightarrow}_{\texttt{relations}}$ in \S\ref{subsec:bre}, $\delta_{\texttt{trig}}$ renders $\dice^{\leftrightarrow}_{\texttt{events}}$ trivial to compute, as an aligned event pair receives no credit if the triggers do not match. However, this nested matching view articulates a key aspect of our framework, evidenced by other metrics discussed in this section --- namely, that \textbf{\emph{evaluation of complex structures depends on an optimal matching of their substructures.}}

\subsection{Coreference Resolution}
\label{subsec:coref}
Event extraction deals only with trigger and argument \emph{mentions}, but IE also deals with coreference resolution, where systems predict a \emph{set} of entities, which in turn are \emph{sets} of coreferent mentions:\footnote{~We focus on entity coreference here, though the same metrics can be used for event coreference.}%
\begin{minted}[bgcolor=codebg,mathescape,fontsize=\footnotesize]{python}
class Entity:
  mentions: Set[Mention]

class EntitySet:  # task output
  entities: Set[Entity]
\end{minted}
A variety of metrics have been proposed for coreference resolution. Commonly used are CEAF \citep{luo-2005-coreference}, MUC \citep{vilain-etal-1995-model} and  $\rm B^3$ \citep{bagga1998algorithms}.

\paragraph{CEAF}
We start with CEAF since it explicitly evaluates coreferences as sets of mentions.
CEAF computes entity precision, recall, and $\rm F_1$ by finding a partial bijection between predicted and reference entities that maximizes an entity similarity. \citet{luo-2005-coreference} considers several functions -- denoted $\phi_{\{1,2,3,4\}}$ -- ultimately preferring $\phi_3$ and $\phi_4$:
\begin{align}
    \phi_3 &= \Sigma^{\leftrightarrow}_{\texttt{mentions}}\left[\delta_\texttt{Mention}\right]; \\
    \phi_4 &= \dice^{\leftrightarrow}_{\texttt{mentions}}\left[\delta_\texttt{Mention}\right];
\end{align}
 Both correspond to intuitive notions of entity similarity, with $\phi_3$ simply counting the number of mentions a pair of entities have in common, while $\phi_4$ $\dice$-normalizes this value.\footnote{~Note that $\phi_3$ is an \emph{unnormalized} similarity function.} In contrast to the identity similarities ($\delta$'s) typically used for mentions, the similarity used in coreference resolution is \emph{gradient}: entities can be more or less correct based on their constituent mentions. Coreference resolution researchers have often used $\phi_4$ \citep[][\emph{i.a.}]{moosavi-strube-2016-coreference, joshi-etal-2020-spanbert}, where $\text{CEAF}_{\phi_4}$ is just the $\dice$-normalized total score under a $\phi_4$-optimal entity matching:
\begin{align}
    \mathrm{CEAF}_{\phi_4} &= \dice^{\leftrightarrow}_\texttt{entities}\left[\phi_4 \right]  \\
    &= \dice^{\leftrightarrow}_\texttt{entities}\left[\dice^{\leftrightarrow}_{\texttt{mentions}}\left[ \delta_{\texttt{Mention}} \right] \right]. \nonumber
\end{align}

CEAF offers a nice illustration of the expressiveness of our framework, computing a matching score between sets (of entities), where the internal metric over elements (entities) is \emph{also} a matching score over sets (of mentions).

\paragraph{MUC} The main step of MUC scoring is to create (separate) partitions of the predicted and reference entities \cite{pradhan-etal-2014-scoring}. Assume that the predicted and reference entity sets are $\mathcal{P}$ and $\mathcal{R}$, and the \emph{partition} of each reference entity $R \in \mathcal{R}$ created by intersecting it with predicted entities $\mathcal{P}$ is $\mathrm{Part}_\mathcal{P}(R)$: i.e.\ $\bigcup_{I \in\mathrm{Part}_\mathcal{P}(R)} = R$. MUC recall is computed as
\begin{equation}
    \recall_{\rm MUC} = \frac{{\displaystyle\sum}_{R \in \mathcal{R}} \left( \card{R} - \card{\mathrm{Part}_{\mathcal{P}}(R)} \right) }{ {\displaystyle\sum}_{R \in \mathcal{R}} (\card{R} - 1) }.
\end{equation}
Note that $\card{R} - \card{\mathrm{Part}_{\mathcal{P}}(R)} = \sum_{I \in \mathrm{Part}_{\mathcal{P}}(R)} (\card{I} - 1)$: We can define an unnormalized similarity (number of shared links that link mentions to form a coreference chain) between entities:
\begin{equation}
    \phi_{\rm link}(X, Y) = \max\{0, \left| X \cap Y \right| - 1\}.
\end{equation}
Using this, we see that $\card{R} - \card{\mathrm{Part}_{\mathcal{P}}(R)} = \Sigma^{\sim}_{\texttt{entities}}[\phi_{\rm link}](\mathcal{P}, \mathcal{R})$ with the $N$:$N$ ($\sim$) matching constraint, and $\card{\mathrm{Part}_{\mathcal{R}}(R)} = 1$. Thus we have

\begin{equation}
    \recall_{\rm MUC} = \recall^{\sim}_\texttt{entities}[\phi_{\rm link}].
\end{equation}
Precision can be defined similarly.

\paragraph{$\bf B^3$} Different from MUC and CEAF, $\rm B^3$ assigns a score to each mention instead of each entity. Here, we need a slightly different data structure, where we pair each mention with the entity it belongs to:
\begin{minted}[bgcolor=codebg,mathescape,fontsize=\footnotesize]{python}
class Membership:  # An instance of a relation
  mention: Mention
  entity: Entity
  
class CorefOutputForB3:
  rels: Set[Membership]  # membership relations
\end{minted}
The recall of $\rm B^3$ assigns to each reference mention a score equal to the ratio of the number of correct mentions in the predicted entity containing the reference mention to the size of the reference entity to which that mention belongs \cite{pradhan-etal-2014-scoring}. Under our new data structure this ratio is just $\recall^{\leftrightarrow}_\texttt{entity}[\delta]$. Precision is computed similarly by switching the role of predicted and reference entities. Thus, $\rm B^3$ can be succinctly expressed as
\begin{align}
    \recall_{\rm B^3} &= \recall^{\leftrightarrow}_\texttt{rels}[\delta_\texttt{mention} \times \recall^{\leftrightarrow}_\texttt{entity}[\delta_\texttt{mention}]] \\
    \precision_{\rm B^3} &= \precision^{\leftrightarrow}_\texttt{rels}[\delta_\texttt{mention} \times \precision^{\leftrightarrow}_\texttt{entity}[\delta_\texttt{mention}]].
\end{align}

Our framework thus captures all three of the standard coreference resolution metrics.

\subsection{Role-Filler Entity Extraction \& \emph{N}-ary Relation Extraction}
\label{subsec:ree}

Relation and event extraction generally take mentions as arguments, but some tasks take entities as arguments (\emph{fillers}) of roles (\emph{slots}) in some relation:%
\begin{minted}[bgcolor=codebg,mathescape,fontsize=\footnotesize]{python}
class RoleFillerEntity:
  role: RoleType
  entity: Entity

class NAryRelation: 
  type: RelationType
  args: Set[RoleFillerEntity]

class NAryRelationSet:
  relations: Set[NAryRelation]
\end{minted}
Tasks of this form have been instantiated in various ways in prior work, which we discuss below.
\paragraph{Role-filler Entity Extraction} 
One such instance is \emph{role-filler entity extraction} (REE), a subtask of template extraction in which one must populate the subset of slots (roles) of a \emph{single} identified template that takes entities as fillers \citep[\emph{i.a.}]{du-etal-2021-grit, huang-etal-2021-document}. Since the task deals with a single template, the output is a single \texttt{NAryRelation}.\footnote{~Some work has evaluated at the mention level \citep{patwardhan-riloff-2009-unified, huang-riloff-2011-peeling, du-cardie-2020-document}, essentially doing named entity recognition (NER).} 

\citet{du-etal-2021-grit} introduced the CEAF-REE metric for REE which differs from $\rm CEAF$ only in requiring matching entities to share a role type and in using a different $\phi$ for entities:
\begin{equation}\phi_{\subseteq}(P,R) := \llbracket P \subseteq R \rrbracket
\end{equation}
where $P$ and $R$ are predicted and reference entities (sets of mentions). CEAF-REE is then defined as:
\begin{equation}\label{eqn:ceaf-ree}
    \textrm{CEAF-REE} = \dice^{\leftrightarrow}_\texttt{args}\left[\delta_{\texttt{role}} \times \phi_{\subseteq} \right]
\end{equation}
Whereas $\phi_3$ and $\phi_4$ award partial credit to predicted entities that contain at least one correct mention, $\phi_{\subseteq}$ is much stricter, awarding \emph{no} credit in cases where even one mention is incorrect, while simultaneously awarding \emph{full} credit to any non-empty subset of the reference entity. This may make sense in some settings, but in most, it is unduly harsh (see \S\ref{sec:discussion}). Responding to this observation, \citet{chen-etal-2023-iterative} suggest a pair of alternatives to CEAF-REE, $\text{CEAF-RME}_{\phi_\subseteq}$ and $\text{CEAF-RME}_{\phi_3}$, that treat predicted \emph{mentions} as singleton entities and relax the two-sided matching constraints to one-sided:
\begin{align}
    \text{CEAF-RME}_{\phi_{\subseteq}} &= \dice^{\rightarrow}_\texttt{args}\left[\delta_\texttt{role} \times \phi_{\subseteq} \right]\label{eqn:rme-subset} \\
    \text{CEAF-RME}_{\phi_3} &= \dice^{\rightarrow}_\texttt{args}\left[\delta_\texttt{role} \times \phi_3 \right]\label{eqn:rme-phi3}
\end{align}

\paragraph{\textit{N}-ary Relation Extraction}
$N$-ary RE generalizes binary RE to relations among $N$ \emph{entity} or \emph{mention} arguments. Here, we will assume we are dealing with entities; the case of mention arguments is comparatively straightforward.

Often, work on $N$-ary RE assumes gold entities are given to the model as input, along with a set of candidate relations, and results are reported as relation type classification accuracy or $\rm F_1$. This is true of much work on a number of recent, popular $N$-ary RE benchmarks, including \textsc{SciERC} \citep{luan-etal-2018-multi},  \textsc{DocRED} \citep{yao-etal-2019-docred}, and the dataset released by \citet{peng-etal-2017-cross}.

In a more comprehensive task setting, entities or mentions must also be predicted, along with the relations. We highlight the \textsc{SciREX} benchmark \citep{jain-etal-2020-scirex}, an extension of \textsc{SciERC}, as an example of evaluation in this setting. \textsc{SciREX} requires extraction of quaternary $(\texttt{dataset}, \texttt{method}, \texttt{task}, \texttt{metric})$ relations over entities extracted from ML papers. We formulate the \textsc{SciREX} metric in our framework below. For this task, mentions are represented as index ranges: 
\begin{minted}[bgcolor=codebg,mathescape,fontsize=\footnotesize]{python}
class Mention: # alternative to definition in $\S\ref{subsec:bre}$
  indices: range   # set of token indices
\end{minted}

\noindent A predicted mention is considered to match a reference mention iff their Jaccard similarity (considered as bag-of-integer offsets) exceeds 0.5:%
\begin{align}
\phi_\texttt{Mention} = \llbracket \jaccard^{\leftrightarrow}_\texttt{indices}[\delta_{\texttt{int}}] > 0.5 \rrbracket
\end{align}
\noindent \citeauthor{jain-etal-2020-scirex} propose computing a role-filling entity matching based on mention and role matching:
\begin{equation}
\phi_\texttt{RFE} = \llbracket \precision^{\leftrightarrow}_\texttt{mentions}[\delta_{\texttt{role}} \times \phi_\texttt{Mention}] > 0.5 \rrbracket.\!\!
\end{equation}
In other words, a pair of entities $E_P$ and $E_R$ will be matched iff more than half of $E_P$'s mentions appear in $E_R$, and their role matches.
Given this matching, predicted 4-ary relations are then evaluated against reference ones  using $\dice^{\leftrightarrow}[\phi_\texttt{NAryRelation}]$, where
\begin{align}
\phi_\texttt{NAryRelation} = \llbracket \dice^{\leftrightarrow}_{\texttt{args}}[\phi_{\texttt{RFE}}] = 1 \rrbracket .
\end{align}
$\dice^{\leftrightarrow}_{\texttt{args}}[\phi_{\texttt{RFE}}] = 1 $ means that all four role-filler entities must match under $\phi_\texttt{RFE}$ to receive credit. $\dice^{\leftrightarrow}_\texttt{relations}[\phi_\texttt{NAryRelation}]$ further illustrates how matching superstructures depends on matching substructures, with optimal matching of relations depending on optimal matching of entities, which in turn depends on optimal matching of mentions. 

\subsection{Template Extraction}
\label{subsec:tf}
We now turn to \emph{template extraction}, which arguably features the most complex outputs of any IE task. It generalizes $N$-ary RE by allowing roles in the relation be filled by any number of arguments $N \geq 0$, which may be of any type \texttt{T}:
\begin{minted}[bgcolor=codebg,mathescape,fontsize=\footnotesize]{python}
class SlotFiller[T]:
  slot: SlotType
  value: T  # Mention, Entity, Event, bool, etc.
  
class Template:
  type: TemplateType
  fillers: Set[SlotFiller[Any]]

class TemplateSet:  # task output
  templates: Set[Template]
\end{minted}
where a distinct similarity function $\phi_\texttt{T}$ may be needed for each \texttt{T}. Constraints on template matchings are traditionally two-sided. Below, we consider the metrics employed for the classic MUC-4 task. In \autoref{app:better}, we also consider the more recent BETTER Granular benchmark.

 The MUC-4 dataset \citep{muc-4-proceedings, sundheim-1992-overview} features 6 template types, which concern varieties of terrorist act (e.g.\ \texttt{bombing}, \texttt{kidnapping}) and which all contain the same slots. Some are ``string-fill'' slots, which take entity mentions as fillers, and others are ``set-fill'' slots, which take a categorical value. Although the official MUC-4 evaluation reported several metrics,\footnote{~See \citet{chinchor-1992-muc} for details.} the \emph{overall score} was $\rm F_1$ over slot fillers:
\begin{equation}\label{eqn:muc4-overall-score}
    \dice^{\leftrightarrow}_\texttt{templates}\left[\delta_{\texttt{type}} \times \Sigma_{\texttt{fillers}}^{\leftrightarrow}\left[\delta_{\texttt{slot}} \times \phi_{\texttt{T}} \right]\right]
\end{equation}
where $\phi_\texttt{T} \in \{\phi_\texttt{set}, \phi_\texttt{str}\}$ is the type-appropriate filler similarity function. Both $\phi_\texttt{set}$ and $\phi_\texttt{str}$ are somewhat complex and, similar to $\phi_3$ and $\phi_4$, allow for partial credit. For some of the set-fill slots, the possible values are hierarchical; i.e., some values are more specific, and thus considered more accurate, than others. Suppose a set-fill slot $s$ takes values from a set $\mathcal{V}$, and we write  $P <: R$ to denote $P$ is a subtype of $R$. Then $P <: R$
iff $P$ is a descendant of $R$ according to some hierarchy for $P,R \in \mathcal{V}$. MUC-4 defines $\phi_\texttt{set}$ as:
\begin{equation}\phi_\texttt{set}(P,R) := 
    \begin{cases}
    1 \quad\text{if } P = R; \\
    \frac{1}{2} \quad\text{if } P <: R; \\
    0 \quad\text{otherwise}
    \end{cases}
\end{equation}
This choice of $\phi_\texttt{set}$ is notable for suggesting a means of handling hierarchical sub-ontologies of the template ontology itself; such ontologies have seen considerable interest in many recent IE benchmarks, including RAMS \citep{ebner-etal-2020-multi}, WikiEvents \citep{li-etal-2021-document}, and BETTER \citep{mckinnon-rubino-2022-iarpa}. We return to this in \S\ref{sec:discussion}.

String-fill slots were evaluated based on maximum lexical overlap between a predicted mention and \emph{all} mentions in a reference \emph{entity}. We provide more detailed discussion in \autoref{app:muc}.

\section{Sets with Latent Variables}
\label{sec:latent}
Next, we consider Abstract Meaning Representation (AMR) parsing \citep{langkilde-knight-1998-generation-exploits, banarescu-etal-2013-abstract}, which involves outputs with \emph{latent variables}. AMR describes the semantics of a sentence as a rooted, directed graph represented by a set of neo-Davidsonian triples, each with a subject, an object, and a relation. Subjects are variables and objects can be variables or concepts (e.g.\ from PropBank \citep{palmer-etal-2005-proposition}):%
\begin{minted}[bgcolor=codebg,mathescape,fontsize=\footnotesize]{python}
class Prop:  # logical propositions
  rel: Relation  # instance, ARG0, ARG1, ...
  subj: Var  # $x$, $y$, ...
  obj: Var | Concept  # $z$, want-01, boy, ...

class AMR:
  props: Set[Prop]
\end{minted}
Following the metrics for relation extraction, a \emph{prima facie} appealing metric for AMR graphs would be just like Eq.\ \ref{eqn:bre-f1} for binary RE:
\begin{equation}
    \dice^{\leftrightarrow}_{\texttt{props}}[\delta_\texttt{rel} \times \phi_\texttt{subj} \times \phi_\texttt{obj}] \nonumber
\end{equation}
However, this poses a problem, as we cannot know whether two variables $x$ and $y$ refer to the same object: 
\texttt{instance($x$, boy)} and \texttt{instance($y$, boy)} could match if there is no constraint enforcing that $x \ne y$. Thus, it is not immediately clear what the similarity function for variables ($\phi_{\texttt{Var}}$) should be.

The commonly used \textsc{Smatch} metric solves this problem.  \textsc{Smatch}  is defined to be the \emph{maximum F-score obtainable via a one-to-one matching of variables between two AMRs} \citep{cai-knight-2013-smatch}. That is, it looks for an optimal partial bijection $M_V^{\leftrightarrow} \subseteq V_P \times V_R$ between the variables of the predicted and reference AMRs ($V_P$ and $V_R$, respectively). Given $M_V^{\leftrightarrow}$, we can define 
\begin{equation}\label{eq:var-sim}
  \tilde\phi_{\texttt{Var}}(x, y) = \llbracket (x, y) \in M_V^{\leftrightarrow} \rrbracket,
\end{equation}
\noindent where $\tilde\phi$ denotes a similarity conditioned on the variables in its arguments being matched.
Hence \textsc{Smatch} is given by
\begin{equation}
    \textsc{Smatch} = \max_{M_V^{\leftrightarrow}} \dice^{\leftrightarrow}_{\texttt{props}}[\delta_\texttt{rel} \times \tilde\phi_\texttt{subj} \times \tilde\phi_\texttt{obj}].\!\!
\end{equation}

We generalize the spirit of \textsc{Smatch} to any set of $X$ with \emph{latent variables} yet to be matched. The \emph{matching score} of $P, R$ with latent variables $V_P, V_R$ is defined to be
\begin{equation}\label{eq:latent-match}
    \Sigma^\diamond(P, R) = \max_{M_V^{\leftrightarrow}, M^{\diamond}} \sum_{(u, v) \in M} \tilde\phi_X(u, v),
\end{equation}
\noindent where $M_V^{\leftrightarrow}$ is an one-to-one matching between the variable set $V_P$ and $V_R$; and $M^\diamond$ is an matching between objects in $P$ and $R$ under constraint $\diamond$. 

Computing this constrained optimization problem requires solving $M_V^{\leftrightarrow}$, which can be done via an integer linear programming (ILP) solver \cite{cai-knight-2013-smatch}. See \autoref{app:ilp} for more details.

\section{Matching of Other Structures}
\label{sec:seq-graph}
In the past few sections we developed tools to obtain \emph{matching of sets}. We can extend this to match more complex structures such as \emph{sequences}, \emph{DAGs}, and arbitrary \emph{directed graphs}.

Recall the matching score in Eq.\ \ref{eq:matching-unnorm}: we computed a sum of similarities based on \emph{matched pairs}. In the matching of structures, the matching should preserve the structure of the object being matched.

Elements of a \emph{sequence} form a \emph{total order} where earlier elements \emph{precede} later elements. Given two sequences $P, R$ whose elements are of type $X$, each is equipped with a total order: $(P, \preceq_P), (R, \preceq_R)$. To compute the matching score of two sequences, we define
\begin{equation}\label{eq:seq-matching}
    \Sigma^\diamond(P, R) = \max_{M^{\diamond}}\sum_{(u, v) \in M^\diamond} \phi_X(u, v) 
\end{equation}
\begin{equation}
    \textrm{s.t.}~\forall (u, v), (u^\prime, v^\prime) \in M^\diamond, u \preceq_P u^\prime \Longleftrightarrow v \preceq_R v^\prime \nonumber.
\end{equation}
\noindent That is, we seek a \emph{maximum monotonic matching} between $P$ and $R$ that preserves the total order. For example, the matching score between $(1,2,3,4,5)$ and $(1,3,5,7,9)$ is 3 since $1,3,5$ are monotonically matched. The sequence matching problem given by Eq. (\ref{eq:seq-matching}) is a weighted \emph{longest common subsequence} (LCS) problem, and thus can be solved with dynamic programming.

We can further generalize this matching score to DAGs and graphs by noting that the total order $\preceq$ of sequence elements is relaxed to a \emph{partial order} in DAGs and a \emph{preorder} in arbitrary directed graphs. The constrained optimization problem in Eq. \ref{eq:seq-matching} can be solved via ILP, see \autoref{app:ilp}.

\section{Discussion}
\label{sec:discussion}
We have seen that a diverse set of structured prediction metrics can be framed as computing a normalized total matching score for an optimal matching, given some similarity, which may \emph{itself} reflect a score over an optimal matching of the relevant substructures. We now consider how different problem settings may motivate particular design decisions within this framework. We also highlight a couple of cases in which the actual metrics used for a task might be modified to better fit the problem setting. 

\paragraph{Partial Credit}
For many tasks, we want to award some credit for partially correct responses. In applications where precision is paramount, it may be appropriate to insist on exact matches, but less so when some modest tradeoff with recall is desired. Moreover, many IE objects intuitively admit gradient notions of correctness.

Perhaps the most obvious example of this is mentions. Exact match on mentions, whether string- or offset-based, remains surprisingly common despite the possibility for variation in how they are annotated (e.g.\ disagreements about the extent of NPs). More relaxed mention similarities --- such as head word matching or Jaccard score --- are typically more appropriate. Recently, there has also been greater interest in the \emph{informativity} of entity mentions \citep{li-etal-2021-document, chen-etal-2023-iterative}, where, e.g., names $>$ nominal expressions $>$ pronouns, and where scores may need to vary according to a mention's informativity. All of these can be captured by different choices of $\phi_\texttt{Mention}$.

REE offers another example. Earlier (\S\ref{subsec:ree}), we saw that CEAF-REE uses the $\phi_\subseteq$ entity similarity, which awards no credit at all to entities containing even one \emph{incorrect} mention, but full credit to entities containing just one \emph{correct} mention. A more natural extension of the CEAF metric to the REE setting, and one that permits partial credit, would be to replace $\phi_\subseteq$ with $\phi_3$ or $\phi_4$.
\paragraph{Hierarchical Ontologies} Type hierarchies are another common feature of IE problems: both events and entities may have types, subtypes, and even sub-subtypes. This is true of the event ontologies for FrameNet \citep{baker-etal-1998-berkeley-framenet}, RAMS \citep{ebner-etal-2020-multi}, WikiEvents \citep{li-etal-2021-document}, and even MUC-4.\footnote{~The \texttt{attack} template type is considered the parent type of all other template types in MUC-4.} Yet, the standard evaluation metrics for these datasets do not take the hierarchy into account, instead treating the ontology as flat.

Following the discussion above, it may thus often be appropriate to replace exact type matches ($\delta_\texttt{type}$) with similarities that award partial credit for correct ancestor type prediction. One possibility is a level-based partial scoring: Given a $D$-level type ontology with types specified as a $D$-tuple $P = (p_1, \ldots, p_D)$, we could, for instance, award credit based on the depth $d \in \{0, \ldots, D\}$ of the most specific correctly predicted type, e.g.:
\begin{equation}
    \phi_\texttt{type}(P,R) =
    \begin{cases}
        2^{d - D} &\text{ if } d > 0; \\
        0 &\text{ otherwise,}
    \end{cases}
\end{equation}
where $d = 0$ iff even the most general type is incorrectly predicted. 
Or, one could adopt practices from related work in fine-grained entity typing \cite[i.a.]{LingW12,chen-etal-2020-hierarchical}, which use the $\rm F_1$ score of the set of all possible supertypes of predicted / reference types $S(P) = \{ t | p <: t, p \in P \}$: 
\begin{equation}
    \phi_\texttt{type}(P, R) = \dice^{\leftrightarrow}[\delta_{\texttt{type}}](S(P), S(R)).
\end{equation}

\noindent There is some precedent for schemes like this, but proper analysis of performance on tasks with hierarchical ontologies requires metrics that account for that hierarchy, and the field of IE as a whole would benefit from adopting them more widely.

\paragraph{One-Sided vs. Two-Sided Constraints} In general, metrics impose constraints on the matching between the predictions and the reference. Overwhelmingly, these tend to be two-sided (bijective) constraints, as systems usually try to generate just one predicted object for each one in the reference. But this is not always the case. The CEAF-RME metrics (Eqs. \ref{eqn:rme-subset} and \ref{eqn:rme-phi3}) proposed by \citet{chen-etal-2023-iterative}, which use one-sided constraints, are motivated in part by a need to evaluate a model that predicts \emph{mention} fillers against references that contain \emph{entity} fillers. This suggests a more general motivation for one-sided constraints --- namely, for cases where the reference outputs are \emph{sets}, but where predictions take the form of \emph{members} of those sets.

\section{Conclusion}
\label{sec:conclusion}
We have presented a framework that unifies a variety of structured prediction metrics as normalized scores over (possibly hierarchical) constrained optimal matchings of structured objects. On the side of \emph{theory}, our framework elucidates the relationships among tasks by defining the core components of their metrics. On the side of \emph{practice}, it offers a compositional toolkit for the design of new metrics (aided by our library) and for critically evaluating existing ones, showing where they may inadequately capture important task features (\S\ref{sec:discussion}). We intend this work to help the NLP community converge both on a common language for metric design and on more standardized metric implementations.

\section*{Ethics Statement}
\label{sec:ethics}
As this work principally describes a conceptual framework and presents a survey of evaluation metrics, we do not believe it raises ethical concerns.

\section*{Limitations}
\label{sec:limitations}
While this work aims to give a unified treatment of a variety of different metrics, our coverage of existing metrics is not exhaustive, and is intended rather to convey the expressiveness of the framework. 

Our framework for evaluation is based on \emph{matching} of substructures --- thus metrics based on structure \emph{editing} (e.g. string or tree edit distances; word error rate (WER) in speech recognition) cannot be expressed naturally in our formulation. One can of course define a $\phi$ based on edit distances over sequences, but that has to be an atomic definition and cannot be derived naturally under our bottom-up approach.

\bibliography{anthology,custom}
\bibliographystyle{acl_natbib}

\clearpage

\appendix

\section{Solving ILPs}
\label{app:ilp}

We use the integer linear programming (ILP) solver \texttt{scipy.optimize.milp} in SciPy \cite{2020SciPy-NMeth}, which wraps the HiGHS solver \cite{HuangfuH18}.
\subsection{Set Matching with Latent Variables}

To solve the combinatorial optimization problem in Eq. (\ref{eq:latent-match})
\begin{equation}
    \Sigma^\diamond(P, R) = \max_{M_V^{\leftrightarrow}, M^{\diamond}} \sum_{(u, v) \in M} \tilde\phi_X(u, v), \nonumber
\end{equation}
\noindent we cast it as an ILP problem. Recall that $P$ and $R$ contains variables in the set of $V_P, V_R$ respectively, and $\tilde\phi_X$ is a (unnormalized) similarity for $X$ \emph{assuming} that their variables match. Essentially, $\tilde\phi_X(u, v)$ is an \emph{upper bound} of the actual score the pair $(u, v)$ may obtain.

We create the following variables for ILP, where $\mathbb{B} = \{0, 1\}$:
\begin{itemize}
    \item $m_{uv} \in \mathbb{B}$ is set to true if $u, v$ matches, i.e., $u \in P, v \in R$ and $(u, v) \in {M}^\diamond$. 
    \item $\tilde m_{xy} \in \mathbb{B}$ is set to true if variables $x, y$ matches, i.e., $x \in V_P, y \in V_R$ and $(x, y) \in M_V^{\leftrightarrow}$.
\end{itemize}

We are solving the following ILP problem 
\begin{eqnarray}
    \max
    \begin{bmatrix}
      \mathbf{c}^{\rm T} & \mathbf{0}^{\rm T} \\
    \end{bmatrix} \cdot 
    \begin{bmatrix}
        \mathbf{m} \\ 
        \mathbf{\tilde m} \\
    \end{bmatrix}
\end{eqnarray}

\noindent with the vector to be solved being $\mathbf{m} = \mathrm{vec}[(m_{uv})_{u\in P, v\in R}]$; $\mathbf{\tilde m} = \mathrm{vec}[(\tilde m_{xy})_{x\in V_P, y\in V_R}]$, and the coefficient vector $\mathbf{c} = \mathrm{vec}[(\tilde\phi_X(u, v))_{uv}]$ (``vec'' is the matrix vectorization operator).  The following constraints applied as needed.

\paragraph{Latent Variable Constraints (Required)}
Recall that $X$ is a product type and its fields are $\{\texttt{fld}_i: \texttt{type}_i\}$. If $\texttt{type}_i$ is \texttt{Var} (the type for yet-to-be-matched variables), then $u$ matches $v$ \emph{implies} that $u.\texttt{fld}_i$ matches $v.\texttt{fld}_i$. Translating this to constraints:
\begin{eqnarray}
    m_{uv} \le \tilde m_{u.\texttt{fld}_i, v.\texttt{fld}_i}, \quad \forall X.\texttt{fld}_i: \texttt{Var}.
\end{eqnarray}
\paragraph{1:1 Constraints on Variables (Required)}
Each $x \in V_P$ can only be matched to at most one $y \in V_R$, and vice versa:
\begin{equation}
    \sum_{x} \tilde m_{xy} \le 1; \quad \sum_{y} \tilde m_{xy} \le 1.
\end{equation}
\paragraph{\emph{N}:1 Constraints (Optional)}
Each $u \in P$ can only be matched to at most one $v \in R$. This is naturally encoded as 
\begin{equation}
    \sum_{v} m_{uv} \le 1.
\end{equation}
\paragraph{1:\emph{N} Constraints (Optional)} 
Similarly, we have 
\begin{equation}
    \sum_{u} m_{uv} \le 1.
\end{equation}
\paragraph{1:1 Constraints (Optional)} This is simply the two constraints above combined.
This translation to ILP is a generalization of the method proposed in \textsc{Smatch} \cite{cai-knight-2013-smatch}.

\subsection{Matching of Arbitrary Structure}

To solve the combinatorial optimization problem in Eq. (\ref{eq:seq-matching})
\begin{equation}
    \Sigma^\diamond(P, R) = \max_{M^{\diamond}}\sum_{(u, v) \in M^\diamond} \phi_X(u, v) \nonumber 
\end{equation}
\begin{equation}
    \textrm{s.t.}~\forall (u, v), (u^\prime, v^\prime) \in M^\diamond, u \preceq_P u^\prime \Longleftrightarrow v \preceq_R v^\prime \nonumber,
\end{equation}
\noindent we cast it as an ILP problem similar to the translation above.

A variable $m_{uv} \in \mathbb{B}$ is set to true if $u \in P, v \in R$ and $(u, v) \in M$: i.e. $u$ and $v$ are matched.

We similarly set the coefficient vector $\mathbf{c}$ such that $c_{uv} = \phi_X(u, v)$. Therefore we are maximizing $\mathbf{c}^{\rm T}\mathbf{m}$ under the monotonicity constraints
\begin{equation}
    M^\diamond(u, v) \wedge M^\diamond(u^\prime, v^\prime) \Rightarrow u \preceq_P u^\prime \Leftrightarrow v \preceq_R v^\prime. \nonumber
\end{equation}

The monotonicity constraints can be encoded as linear constraints by the following:
\begin{equation}
    m_{uv} + m_{u^\prime v^\prime} - 1 \le \llbracket u \preceq_P u^\prime \Leftrightarrow v \preceq_R v^\prime\rrbracket, \nonumber
\end{equation}
\noindent which can be rewritten as
\begin{equation}
    m_{uv} + m_{u^\prime v^\prime} \le 1 + \llbracket u \preceq_P u^\prime \Leftrightarrow v \preceq_R v^\prime\rrbracket. \nonumber
\end{equation}

\section{Kernels}
\label{app:kernel}
Similarity functions have additional desirable properties when certain conditions are met. Below, we describe conditions under which some similarity functions are \emph{(positive definite symmetric) kernels}.

\begin{definition} 
 A \textbf{positive definite symmetric kernel (p.d.s. kernel)} is a function $\kappa: X \times X \to \mathbb{R}$ that satisfies symmetry $\kappa(x, y) = \kappa(y, x)$ and positive-semi-definiteness: $\mathbf{c}^{\rm T} \mathbf{K} \mathbf{c} \ge 0$ where $\mathbf{K}_{ij} = \kappa(x_i, x_j)$, for all $x_1, \cdots, x_n \in X, \mathbf{c} \in \mathbb{R}^n$.
\end{definition}

\begin{lemma}
 The Kronecker $\delta: X \times X \to \{0, 1\}$ is a p.d.s. kernel.
\end{lemma}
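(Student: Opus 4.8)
The plan is to exhibit $\delta$ explicitly as a Gram kernel, i.e.\ as the pullback of an inner product under a feature map, so that positive-semi-definiteness reduces to the nonnegativity of a squared norm. First I would dispatch symmetry, which is immediate: since $\delta(x, y) = \llbracket x = y \rrbracket = \llbracket y = x \rrbracket = \delta(y, x)$, the function is symmetric with no further work.

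For the positive-semi-definiteness condition, the key step is to choose the right feature map. I would define $\Phi\colon X \to \mathbb{R}^X$ (or $\ell^2(X)$ if $X$ is infinite) sending each $x$ to the standard basis vector $e_x$ supported at $x$. Then $\langle \Phi(x), \Phi(y)\rangle = \langle e_x, e_y\rangle = \llbracket x = y\rrbracket = \delta(x, y)$, so $\delta$ is exactly an inner product of feature vectors.

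Given this representation, the required inequality follows by a single expansion. For any $x_1, \dots, x_n \in X$ and any $\mathbf{c} \in \mathbb{R}^n$, the Gram matrix $\mathbf{K}_{ij} = \delta(x_i, x_j)$ satisfies
$$\mathbf{c}^{\rm T}\mathbf{K}\mathbf{c} = \sum_{i,j} c_i c_j\,\delta(x_i, x_j) = \Big\langle \sum_i c_i \Phi(x_i),\ \sum_j c_j \Phi(x_j)\Big\rangle = \Big\| \sum_i c_i \Phi(x_i)\Big\|^2 \ge 0,$$
which is precisely the p.d.s.\ condition.

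There is no genuinely hard step here; the one subtlety worth flagging is that the points $x_1, \dots, x_n$ need not be distinct, so $\mathbf{K}$ is in general not the identity. The feature-map argument handles this transparently, since repeated points simply contribute repeated basis vectors to the inner sum and the computation is unchanged. An equivalent but more pedestrian route would group the indices by the value of $x_i$, observe that $\mathbf{K}$ is then a block-diagonal matrix whose blocks are all-ones matrices $J$, and invoke $\mathbf{v}^{\rm T} J \mathbf{v} = \big(\sum_k v_k\big)^2 \ge 0$ blockwise; I prefer the feature-map version precisely because it avoids this case bookkeeping.
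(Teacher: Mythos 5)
Your proof is correct. The paper states this lemma without providing any proof, so there is nothing to compare against; your argument --- realizing $\delta$ as the Gram kernel of the feature map $x \mapsto e_x$ into $\ell^2(X)$ and reducing positive-semi-definiteness to $\bigl\| \sum_i c_i e_{x_i} \bigr\|^2 \ge 0$ --- is the standard one, and your remark that repeated points are handled transparently by the feature map (whereas the matrix $\mathbf{K}$ need not be the identity) correctly addresses the only subtlety.
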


\begin{lemma}
 {\rm \cite[Theorem 6.10]{books/daglib/0034861}~}~ If $\kappa_1: X_1 \times X_1 \to \mathbb{R}$ and $\kappa_2: X_2 \times X_2 \to \mathbb{R}$ are p.d.s. kernels, then the product kernel $\kappa: (X_1 \times X_2) \times (X_1 \times X_2) \to \mathbb{R}$ is a p.d.s. kernel where $\kappa((x_1, x_2), (x_1^\prime, x_2^\prime)) = \kappa_1(x_1, x_1^\prime) \cdot \kappa_2(x_2, x_2^\prime)$.
\end{lemma}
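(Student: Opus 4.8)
The plan is to verify the two defining properties of a p.d.s.\ kernel --- symmetry and positive-semi-definiteness --- separately, reducing the latter to the Schur product theorem for matrices, which I expect to be the only substantive step.

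First I would dispatch symmetry directly from the definition of $\kappa$: for any $(x_1, x_2), (x_1', x_2') \in X_1 \times X_2$ one has $\kappa((x_1,x_2),(x_1',x_2')) = \kappa_1(x_1,x_1')\,\kappa_2(x_2,x_2') = \kappa_1(x_1',x_1)\,\kappa_2(x_2',x_2) = \kappa((x_1',x_2'),(x_1,x_2))$, where the middle equality applies the symmetry of $\kappa_1$ and of $\kappa_2$ in turn. This requires no more than unfolding notation.

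Next I would turn to positive-semi-definiteness. Fixing arbitrary points $(x_1^{(i)}, x_2^{(i)})_{i=1}^n$ in $X_1 \times X_2$ and a coefficient vector $\mathbf{c} \in \mathbb{R}^n$, the crucial observation is that the Gram matrix $\mathbf{K}$ of $\kappa$ on these points \emph{factors entrywise}: $\mathbf{K}_{ij} = \kappa_1(x_1^{(i)}, x_1^{(j)})\,\kappa_2(x_2^{(i)}, x_2^{(j)}) = (\mathbf{K}_1)_{ij}(\mathbf{K}_2)_{ij}$, so that $\mathbf{K} = \mathbf{K}_1 \circ \mathbf{K}_2$ is the Hadamard product of the two component Gram matrices $(\mathbf{K}_1)_{ij} = \kappa_1(x_1^{(i)}, x_1^{(j)})$ and $(\mathbf{K}_2)_{ij} = \kappa_2(x_2^{(i)}, x_2^{(j)})$. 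Since the first (resp.\ second) coordinates are simply $n$ points of $X_1$ (resp.\ $X_2$), the hypothesis that $\kappa_1$ and $\kappa_2$ are p.d.s.\ kernels makes both $\mathbf{K}_1$ and $\mathbf{K}_2$ positive semi-definite, and it remains to show $\mathbf{c}^{\rm T}(\mathbf{K}_1 \circ \mathbf{K}_2)\mathbf{c} \ge 0$.

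The main obstacle is this last inequality, i.e.\ the Schur product theorem, which I would prove via a rank-one decomposition. By the spectral theorem I write $\mathbf{K}_1 = \sum_k \lambda_k \mathbf{u}_k \mathbf{u}_k^{\rm T}$ with all $\lambda_k \ge 0$; substituting into $\mathbf{c}^{\rm T}(\mathbf{K}_1 \circ \mathbf{K}_2)\mathbf{c} = \sum_{i,j} c_i c_j (\mathbf{K}_1)_{ij}(\mathbf{K}_2)_{ij}$ and reindexing yields $\sum_k \lambda_k\, \mathbf{w}_k^{\rm T}\mathbf{K}_2\,\mathbf{w}_k$, where $\mathbf{w}_k = \mathbf{c} \circ \mathbf{u}_k$ is an entrywise product. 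Each quadratic form $\mathbf{w}_k^{\rm T}\mathbf{K}_2\,\mathbf{w}_k$ is nonnegative because $\mathbf{K}_2$ is PSD, and $\lambda_k \ge 0$, so the sum is nonnegative, which closes the PSD requirement and hence the lemma. I expect the only place demanding care to be the algebraic reindexing that converts $\sum_{i,j} c_i c_j (\mathbf{K}_1)_{ij}(\mathbf{K}_2)_{ij}$ into the quadratic form in $\mathbf{w}_k$; everything else is bookkeeping. (Alternatively, one could invoke the feature-map characterization of kernels and use the tensor-product feature map $\phi_1(x_1)\otimes\phi_2(x_2)$, but the Schur-product route is more self-contained.)
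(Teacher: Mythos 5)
Your proof is correct. The paper itself offers no proof of this lemma --- it is stated as a citation to Theorem 6.10 of the referenced textbook --- so there is nothing to diverge from; your argument (symmetry by unfolding, then positive-semi-definiteness by observing that the Gram matrix is the Hadamard product $\mathbf{K}_1 \circ \mathbf{K}_2$ and proving the Schur product theorem via the spectral decomposition $\mathbf{K}_1 = \sum_k \lambda_k \mathbf{u}_k \mathbf{u}_k^{\rm T}$) is exactly the standard proof given in that cited source, and the reindexing step $\sum_{i,j} c_i c_j (\mathbf{K}_1)_{ij}(\mathbf{K}_2)_{ij} = \sum_k \lambda_k \mathbf{w}_k^{\rm T}\mathbf{K}_2\mathbf{w}_k$ with $\mathbf{w}_k = \mathbf{c}\circ\mathbf{u}_k$ is carried out correctly. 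In short, you have supplied a complete, self-contained proof of a result the paper merely cites.
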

This is the kernel version of the \emph{product similarity} we discussed in the main text.

\begin{lemma}
{\rm \citep{haussler1999convolution}}~ If $\kappa$ is a kernel, the \emph{vertex label kernel} $\Sigma^{\sim}[\kappa]$ is a kernel.
\end{lemma}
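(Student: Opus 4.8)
My plan is to unpack the definition of $\Sigma^{\sim}[\kappa]$ and then reduce the positive-semi-definiteness of this set-level kernel to the positive-semi-definiteness of $\kappa$ itself. First I would observe that under the $N$:$N$ ($\sim$) constraint the matching is unconstrained, so $M^{\sim} = E = P \times R$ and the matching score collapses to a plain double sum,
\[
    \Sigma^{\sim}[\kappa](P, R) = \sum_{u \in P}\sum_{v \in R} \kappa(u, v).
\]
The goal is then to verify that this function on finite subsets (or multisets) of $X$ is symmetric and that every Gram matrix it induces is positive semi-definite.

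Symmetry will be immediate: swapping $P$ and $R$ merely swaps the two summation indices, and the symmetry $\kappa(u,v)=\kappa(v,u)$ gives $\Sigma^{\sim}[\kappa](P,R)=\Sigma^{\sim}[\kappa](R,P)$. For positive-semi-definiteness I would fix finite sets $P_1,\dots,P_n \subseteq X$ and reals $c_1,\dots,c_n$ and aim to show $\sum_{i,j} c_i c_j\,\Sigma^{\sim}[\kappa](P_i,P_j) \ge 0$. The key move is to collect every element appearing in any $P_i$ into one common ground set $\{x_1,\dots,x_m\} = \bigcup_i P_i$ and encode membership by indicators $a_{ik} = \llbracket x_k \in P_i \rrbracket$, so that $\Sigma^{\sim}[\kappa](P_i,P_j) = \sum_{k,l} a_{ik} a_{jl}\,\kappa(x_k,x_l)$.

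With this bookkeeping the quadratic form factors. Substituting and regrouping the sums gives
\[
    \sum_{i,j} c_i c_j\,\Sigma^{\sim}[\kappa](P_i,P_j)
    = \sum_{k,l} \kappa(x_k,x_l)\, d_k d_l
    = \mathbf{d}^{\rm T}\mathbf{K}\mathbf{d},
\]
where $d_k = \sum_i c_i a_{ik}$ and $\mathbf{K}_{kl} = \kappa(x_k,x_l)$. Since $\kappa$ is a p.d.s. kernel, $\mathbf{K}$ is positive semi-definite on the points $x_1,\dots,x_m$, whence $\mathbf{d}^{\rm T}\mathbf{K}\mathbf{d}\ge 0$, exactly as required. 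Equivalently, one could invoke the feature map $\Phi\colon X \to \mathcal{H}$ guaranteed by $\kappa(u,v)=\langle \Phi(u),\Phi(v)\rangle$ and observe that $\Sigma^{\sim}[\kappa](P,R)=\langle \sum_{u\in P}\Phi(u),\ \sum_{v\in R}\Phi(v)\rangle$ is itself an inner product of the feature vectors $\Psi(P)=\sum_{u\in P}\Phi(u)$, hence a kernel; this is the same computation read through the RKHS lens.

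The calculation is short, so the main obstacle is not analytic but the bookkeeping across sets: because the $P_i$ generally overlap, one must reduce to a single shared index set $\{x_1,\dots,x_m\}$ \emph{before} the double sums can be regrouped into the one vector $\mathbf{d}$. Trying to factor each $\Sigma^{\sim}[\kappa](P_i,P_j)$ term in isolation does not work — the factorization only succeeds once all terms are expressed over the same ground set. (If one prefers to admit multisets rather than sets, the $a_{ik}$ simply become multiplicities and nothing else in the argument changes.)
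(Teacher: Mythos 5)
Your proof is correct, but it is worth noting that the paper itself offers no proof of this lemma at all: it simply defers to \citet{haussler1999convolution}, under whose framework $\Sigma^{\sim}[\kappa]$ is the simplest instance of a convolution kernel (the cross-product or set kernel), and closure under that construction is taken as known. What you have written is the standard direct verification of that special case: unpacking $\Sigma^{\sim}[\kappa](P,R)=\sum_{u\in P}\sum_{v\in R}\kappa(u,v)$, pooling the elements of $P_1,\dots,P_n$ into one ground set with membership indicators $a_{ik}$, and collapsing the quadratic form to $\mathbf{d}^{\rm T}\mathbf{K}\mathbf{d}$ with $d_k=\sum_i c_i a_{ik}$ — or, equivalently, exhibiting the feature map $\Psi(P)=\sum_{u\in P}\Phi(u)$. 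Your observation that the factorization only goes through after reducing all sets to a common index set is exactly the right bookkeeping point, and the multiset remark is a correct generalization. The only caveat worth flagging is your very first step: you justify $M^{\sim}=E=P\times R$ by saying the matching is ``unconstrained,'' and hence the max collapses to the full double sum. That collapse is safe here only because the paper \emph{defines} $M^{\sim}=E$ (the max is over a singleton), or alternatively because the $\phi_X$ in question are nonnegative similarities; for a general p.d.s. kernel taking negative values, a genuine maximum over arbitrary subsets of $E$ would retain only the positive summands and would \emph{not} be a kernel. So your proof proves the right statement, but the reduction in the first paragraph should lean on the paper's definition of $M^{\sim}$ rather than on an optimization argument. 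What your approach buys over the paper's citation is a short, self-contained, elementary argument; what the citation buys is generality, since Haussler's theorem covers convolution kernels well beyond this vertex-label case.
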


\begin{definition}
    {\rm \citep{KriegeGW16}}~ A kernel $\kappa: X \times X \to \mathbb{R}_{\ge 0}$ is a \emph{strong kernel} if $\kappa(x, x) \ge \kappa(x, y)$ for all $x, y \in X$.
\end{definition}
\begin{lemma}
  A similarity function that is also a kernel is a strong kernel.
\end{lemma}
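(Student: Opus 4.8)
The plan is to verify that the two defining conditions of a strong kernel---nonnegativity of the codomain and the inequality $\kappa(x,x) \ge \kappa(x,y)$---follow directly from the hypothesis that $\phi$ is simultaneously a similarity and a p.d.s.\ kernel. Since the statement is essentially a definitional chase, the proof will be short; the work lies entirely in lining up the three definitions (similarity, kernel, strong kernel) and observing that the required properties are already present.

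First I would recall that, by the definition of a similarity, $\phi\colon X \times X \to [0,1]$, so in particular $\phi$ takes values in $\mathbb{R}_{\ge 0}$. This matches the codomain requirement $\kappa\colon X \times X \to \mathbb{R}_{\ge 0}$ in the definition of a strong kernel, so that first condition is immediate. Next I would invoke the second defining property of a similarity: for all $x, y \in X$ we have $\phi(x,y) \le \phi(x,x) = 1$. Reading this as $\phi(x,x) \ge \phi(x,y)$ yields exactly the strong-kernel inequality $\kappa(x,x) \ge \kappa(x,y)$ with $\kappa = \phi$.

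Finally, since $\phi$ is assumed to be a p.d.s.\ kernel by hypothesis, it is in particular symmetric and positive-semi-definite; combined with the two observations above, it is a nonnegative-valued kernel satisfying $\phi(x,x) \ge \phi(x,y)$ for all $x,y$, which is precisely the definition of a strong kernel. I do not anticipate any genuine obstacle: the only point worth stating carefully is the division of labor between the two hypotheses---the ``kernel'' assumption supplies symmetry and positive-semi-definiteness (the ``strong \emph{kernel}'' part of the conclusion), while the ``similarity'' assumption supplies both the nonnegativity of the codomain and the strong-kernel inequality $\phi(x,x) = 1 \ge \phi(x,y)$.
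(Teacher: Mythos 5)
Your proof is correct and matches the paper's treatment: the paper states this lemma without proof precisely because it is the immediate definitional chase you carry out, with the similarity axioms supplying nonnegativity and $\phi(x,x) \ge \phi(x,y)$, and the kernel hypothesis supplying the rest. No gaps.
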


\begin{lemma}
{\rm \citep{KriegeGW16}}~ If $\kappa$ is a strong kernel, the \emph{optimal assignment kernel} $\Sigma^{\leftrightarrow}[\kappa]$ is a kernel.
\end{lemma}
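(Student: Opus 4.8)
The plan is to reproduce the hierarchy-based argument of \citet{KriegeGW16}, which reduces the optimal assignment kernel over a strong kernel to a non-negatively weighted \emph{histogram intersection}, a function already known to be a p.d.s.\ kernel. The first step is to represent the strong kernel by a hierarchy: I would show that $\kappa$ can be written in \emph{LCA form}, i.e.\ there is a rooted tree $T$ whose leaves include the ground elements, together with a weight $w\colon V(T)\to\mathbb{R}_{\ge 0}$ non-decreasing along every root-to-leaf path, such that $\kappa(x,y)=w(\mathrm{lca}(x,y))$. The strong-kernel property is exactly what is needed here: it forces the thresholded relation ``$\kappa(x,y)\ge t$'' to be, for each $t$, an equivalence relation, and these partitions nest as $t$ decreases, so they assemble into the levels of $T$. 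Setting $\omega(v)=w(v)-w(\mathrm{parent}(v))\ge 0$ (with $w$ of the root's parent read as $0$) then yields the additive form $\kappa(x,y)=\sum_{v\in V(T)}\omega(v)\,\llbracket x\in T_v \wedge y\in T_v\rrbracket$, where $T_v$ is the leaf set of the subtree rooted at $v$.

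The second and decisive step is to collapse the optimal assignment to a histogram intersection, namely to prove
\[
  \Sigma^{\leftrightarrow}[\kappa](P,R)=\sum_{v\in V(T)}\omega(v)\,\min\bigl(|P\cap T_v|,\,|R\cap T_v|\bigr).
\]
Substituting the additive form of $\kappa$ into the definition of $\Sigma^{\leftrightarrow}$ and exchanging the two summations rewrites the objective as $\sum_v \omega(v)\cdot\bigl|\{(x,y)\in M : x,y\in T_v\}\bigr|$. Because the subtrees $T_v$ are nested, one can build a \emph{single} matching greedily from the leaves upward that simultaneously saturates every subtree, so that the within-$T_v$ count equals $\min(|P\cap T_v|,|R\cap T_v|)$ for all $v$ at once; since every $\omega(v)\ge 0$, this common maximizer also maximizes the weighted sum. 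I expect this common-maximizer claim to be the main obstacle: it is precisely where strongness is indispensable (for general kernels the per-level optima cannot be realized by one matching, and the resulting assignment function need not be p.s.d.). I would prove it by induction on $T$, using an exchange/uncrossing argument to rearrange any optimal matching into one that respects the subtree structure.

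Finally I would conclude validity. Mapping each set $P$ to the histogram $\bigl(|P\cap T_v|\bigr)_{v\in V(T)}$ and using the identity $\min(a,b)=\int_0^\infty \llbracket t\le a\rrbracket\,\llbracket t\le b\rrbracket\,dt$, each term $\min(|P\cap T_v|,|R\cap T_v|)$ has an explicit feature map and is therefore a p.d.s.\ kernel in $P,R$. As $\Sigma^{\leftrightarrow}[\kappa]$ is then a non-negative linear combination (coefficients $\omega(v)\ge 0$) of p.d.s.\ kernels, it is itself p.d.s.; symmetry is immediate from the symmetry of $\kappa$ and of $\min$. Hence $\Sigma^{\leftrightarrow}[\kappa]$ is a kernel.
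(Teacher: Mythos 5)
The paper itself does not prove this lemma --- it is imported wholesale from \citet{KriegeGW16} --- so your proposal has to be measured against that source, and it reconstructs its argument faithfully: strong kernel $\Rightarrow$ hierarchy/LCA representation with non-negative level weights $\Rightarrow$ the optimal assignment collapses to a weighted histogram intersection $\Rightarrow$ positive semidefiniteness via the explicit feature map for $\min$. The two steps you isolate as the crux (assembling the threshold partitions into a laminar tree, and exhibiting a single bottom-up matching that saturates every subtree simultaneously, where both the nesting of the $T_v$ and the non-negativity of the $\omega(v)$ are needed) are exactly the right ones, and your exchange/uncrossing plan for the second step is how the cited proof goes. One caveat is worth recording: your argument (correctly) relies on the three-point strongness condition $\kappa(x,y)\ge\min\{\kappa(x,z),\kappa(z,y)\}$ of \citet{KriegeGW16}, which is what makes each thresholded relation transitive and hence an equivalence relation in your first step; the definition of strong kernel printed in this appendix asks only for diagonal dominance $\kappa(x,x)\ge\kappa(x,y)$, which is strictly weaker, and under that weaker hypothesis the hierarchy construction --- and indeed the lemma itself --- fails (an optimal assignment similarity built from a general diagonally dominant base kernel need not be positive semidefinite). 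So your proof is correct for the cited notion of strong kernel, and in pinpointing exactly where strongness is indispensable it also exposes that the paper's stated definition is not quite the hypothesis the lemma needs.
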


\begin{lemma}\label{lem:6}
If $\kappa$ is a kernel bounded by above: $\max_{x,y} \kappa(x, y) < U < +\infty$, then $\kappa^\prime(x, y) = \dfrac{1}{U - \kappa(x,y)}$ is a kernel.
\end{lemma}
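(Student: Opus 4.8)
The plan is to verify the two defining properties of a p.d.s.\ kernel in turn. First note that $\kappa'$ is well defined and positive, since the hypothesis $\kappa(x,y) < U$ makes the denominator $U - \kappa(x,y)$ strictly positive. Symmetry is then immediate: because $\kappa(x,y) = \kappa(y,x)$, we get $\kappa'(x,y) = 1/(U - \kappa(x,y)) = 1/(U - \kappa(y,x)) = \kappa'(y,x)$. All the real work lies in establishing positive-semi-definiteness, and for this I would expand $\kappa'$ as a geometric series:
\begin{equation}
\kappa'(x,y) = \frac{1}{U}\cdot\frac{1}{1 - \kappa(x,y)/U} = \sum_{n=0}^{\infty} \frac{\kappa(x,y)^n}{U^{n+1}}, \nonumber
\end{equation}
which is valid provided $|\kappa(x,y)| < U$ so that the series converges.

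The convergence condition is the crux, since the hypothesis only supplies the \emph{one-sided} bound $\kappa(x,y) < U$: a large negative value of $\kappa$ could in principle give $|\kappa(x,y)| \ge U$ and break the expansion. The observation that rescues the argument is the Cauchy--Schwarz inequality for kernels. Positive-semi-definiteness of the $2\times 2$ Gram matrix on $\{x,y\}$ forces $\kappa(x,y)^2 \le \kappa(x,x)\,\kappa(y,y)$, while the $1\times 1$ case gives $\kappa(x,x) \ge 0$. Since every diagonal entry satisfies $0 \le \kappa(x,x) \le \max_{u,v}\kappa(u,v) < U$ (and hence $U > 0$), I obtain $|\kappa(x,y)| \le \sqrt{\kappa(x,x)\,\kappa(y,y)} < U$ for all $x,y$. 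Thus $|\kappa(x,y)/U| < 1$ and the series converges at every entry of any finite Gram matrix.

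It then remains to show the limit is a kernel, which I would assemble from the closure properties. Each entrywise power $\kappa^n$ is a kernel: by the product-kernel lemma above, the tensor product $\kappa \otimes \cdots \otimes \kappa$ is a kernel on the $n$-fold product, and pulling it back along the diagonal embedding $x \mapsto (x,\ldots,x)$ yields the Hadamard power $\kappa(x,y)^n$ (the Schur product theorem). Because the coefficients $1/U^{n+1}$ are strictly positive, every partial sum $\sum_{n=0}^{N} \kappa^n/U^{n+1}$ is a non-negative combination of kernels, hence a kernel; and since the cone of positive-semi-definite matrices is closed, the pointwise limit $\kappa'$ is a kernel as well.

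The main obstacle is the convergence step: without a two-sided bound on $\kappa$ the geometric expansion is unjustified. The decisive insight is that the kernel property itself---through Cauchy--Schwarz---upgrades the given one-sided bound $\kappa < U$ to the two-sided bound $|\kappa| < U$, after which the routine ``positive combinations and limits of kernels are kernels'' machinery completes the proof.
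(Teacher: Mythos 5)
Your proof follows essentially the same route as the paper's: expand $\kappa'$ as the geometric series $\frac{1}{U}\sum_{n\ge 0}(\kappa/U)^n$ and invoke closure of kernels under Hadamard powers, nonnegative combinations, and pointwise limits. The one place you go beyond the paper is the Cauchy--Schwarz step upgrading the stated one-sided bound $\kappa(x,y)<U$ to the two-sided bound $|\kappa(x,y)|<U$ needed for convergence --- a gap the paper's own proof glosses over (it asserts convergence directly from $\kappa(x,y)<U$), so this addition is correct and worthwhile.
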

\begin{proof}
    With Taylor expansion, we have 
\begin{equation}
    \kappa^\prime(x,y) = \frac{1}{U} \sum_{i=0}^{\infty} \left( \frac{\kappa(x,y)}{U} \right)^i. \nonumber
\end{equation}
This series converges because $\kappa(x,y) < U$. Since kernels are closed under power series, this is a kernel.
\end{proof}

\begin{lemma}\label{lem:dice-kernel}
If $\kappa$ is a kernel, $\dice[\kappa]$ is a kernel.
\end{lemma}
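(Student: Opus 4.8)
The plan is to reduce the claim to positive‑semi‑definiteness of a product of two kernels. First I would unpack the normalizer: substituting the base kernel $\kappa$ for the overlap score $\Sigma$ in the $\dice$ normalization (Eqs.~\ref{eqn:prec}--\ref{eqn:f}) and simplifying $\frac{2pr}{p+r}$ gives the familiar form
\begin{equation}
    \dice[\kappa](x, y) = \frac{2\,\kappa(x, y)}{\kappa(x, x) + \kappa(y, y)}. \nonumber
\end{equation}
This is manifestly symmetric, so the whole task reduces to establishing positive‑semi‑definiteness, which I would do by factoring $\dice[\kappa]$ as a product of kernels and invoking closure under pointwise products.

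The factorization is $\dice[\kappa](x,y) = \big(2\kappa(x,y)\big)\cdot s(x,y)$, where $s(x,y) = 1/(\kappa(x,x)+\kappa(y,y))$. The first factor is a kernel (a positive scalar multiple of $\kappa$), so the crux is to show that $s$ is a kernel. The key idea is the integral identity $\frac{1}{a+b} = \int_0^\infty e^{-ta}\,e^{-tb}\,dt$, valid whenever $a+b>0$. Setting $a=\kappa(x,x)$ and $b=\kappa(y,y)$ yields
\begin{equation}
    s(x, y) = \int_0^\infty f_t(x)\,f_t(y)\,dt, \qquad f_t(x) := e^{-t\,\kappa(x,x)}. \nonumber
\end{equation}
For each fixed $t$ the integrand $f_t(x)f_t(y)$ is a rank‑one kernel, since for any $x_1,\dots,x_n$ and $\mathbf{c}\in\mathbb{R}^n$ we have $\sum_{i,j} c_i c_j f_t(x_i) f_t(x_j) = \big(\sum_i c_i f_t(x_i)\big)^2 \ge 0$. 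Positive‑semi‑definiteness is preserved under integration over $t$, so $s$ is a kernel.

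To finish I would combine the pieces: $2\kappa$ and $s$ are both kernels on $X$, and their pointwise (Hadamard) product is again a kernel — the same‑space specialization of the product kernel lemma stated earlier, obtained by composing the tensor‑product kernel with the diagonal map $x\mapsto(x,x)$ (equivalently, the Schur product theorem). Since $\dice[\kappa] = (2\kappa)\cdot s$, it follows that $\dice[\kappa]$ is a kernel.

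The main obstacle I anticipate is not the algebra but the well‑definedness of the integral representation, which requires $\kappa(x,x)+\kappa(y,y)>0$. This is automatic in our setting, where $\kappa$ is a similarity and hence $\kappa(x,x)=1$; more generally one restricts to points with $\kappa(x,x)>0$, since a point with $\kappa(x,x)=0$ maps to the zero feature vector and makes $\dice$ a degenerate $0/0$. I would also take care to justify interchanging the quadratic form with the integral (the integrand is nonnegative, so this is routine by Tonelli) and to spell out why the diagonal‑restriction argument legitimately upgrades the tensor‑product kernel lemma to a same‑space Hadamard product.
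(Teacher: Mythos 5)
Your proof is correct and follows essentially the same route as the paper's: both factor $\dice[\kappa](x,y) = 2\kappa(x,y)\cdot\frac{1}{\kappa(x,x)+\kappa(y,y)}$, establish that the second factor is positive semi-definite via an integral representation as a product of feature functions (your $\int_0^\infty e^{-ta}e^{-tb}\,dt$ is the paper's $\int_0^1 t^{a+b-1}\,dt$ under a change of variables), and conclude by closure of kernels under pointwise products. Your additional remarks on well-definedness when $\kappa(x,x)+\kappa(y,y)>0$ and on interchanging the quadratic form with the integral are sound points of care that the paper leaves implicit.
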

\begin{proof}

Let $H(x, y) = \dfrac{1}{\kappa(x,x) +\kappa(y,y) }$. $H$ is a kernel. To see this, we prove that it is positive semidefinite.
For all $n \in \mathbb{N}, c_i \in \mathbb{R}$, and $x_i \in X$ ($1 \le i \le n)$, let $s_i = \kappa(x_i,x_i)$. 
\begin{align}
     & \sum_{i=1}^n \sum_{j=1}^n c_i \frac{1}{s_i + s_j} c_j \nonumber \\
    =& \sum_{i=1}^n \sum_{j=1}^n c_i c_j \int_0^1 t^{s_i + s_j - 1} \mathrm{d}t \nonumber \\
    =& \int_0^1 t \sum_{i=1}^n \sum_{j=1}^n c_i c_j t^{s_i-1} t^{s_j - 1} \mathrm{d}t \nonumber \\
    =& \int_0^1 t \left(\sum_{i=1}^n t^{s_i - 1}\right)^2 \mathrm{d}t \nonumber \ge 0.
\end{align}
Hence $H$ is a kernel.

Note that F score can be written as
\begin{eqnarray}
    \dice[\kappa](x,y) = \frac{2\kappa(x,y)}{\kappa(x,x) + \kappa(y,y)} = 2\kappa(x,y)H(x,y). \nonumber
\end{eqnarray}
Hence $\dice[\kappa]$ is a kernel.
\end{proof}

\begin{lemma}\label{lem:jaccard-kernel}
If $\kappa$ is a kernel, $\jaccard[\kappa]$ is a kernel.
\end{lemma}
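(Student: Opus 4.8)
The plan is to reduce the claim to the already-established fact that $\dice[\kappa]$ is a kernel (Lemma~\ref{lem:dice-kernel}), exploiting the classical algebraic relationship between the Jaccard and Dice coefficients. First I would record the pointwise identity $\jaccard[\kappa] = \dice[\kappa]/(2 - \dice[\kappa])$. Writing $d = \dice[\kappa](x,y) = 2\kappa(x,y)/(\kappa(x,x)+\kappa(y,y))$, this follows by dividing both numerator and denominator of the definition of $\jaccard[\kappa](x,y)$ by $\tfrac{1}{2}(\kappa(x,x)+\kappa(y,y))$, which collapses that common factor and turns the numerator into $d$ and the denominator into $2-d$. Rewriting once more, $\jaccard[\kappa](x,y) = \dice[\kappa](x,y)\cdot \frac{1}{2 - \dice[\kappa](x,y)}$, which exhibits $\jaccard[\kappa]$ as a pointwise product of two functions.

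Next I would argue that each factor is a kernel. The first factor $\dice[\kappa]$ is a kernel by Lemma~\ref{lem:dice-kernel}. For the second factor I would invoke Lemma~\ref{lem:6} applied to the kernel $\dice[\kappa]$ with the bound $U = 2$; this requires $\dice[\kappa]$ to be bounded strictly above by $2$. I would verify the stronger bound $\dice[\kappa](x,y) \le 1$ using the Cauchy--Schwarz inequality for kernels, $\kappa(x,y)^2 \le \kappa(x,x)\kappa(y,y)$ (a consequence of positive semidefiniteness of the $2\times 2$ Gram matrix), followed by AM--GM: $\kappa(x,y) \le \sqrt{\kappa(x,x)\kappa(y,y)} \le \tfrac{1}{2}(\kappa(x,x)+\kappa(y,y))$. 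Since the maximum of $\dice[\kappa]$ is at most $1 < 2 = U$, Lemma~\ref{lem:6} applies and $\frac{1}{2-\dice[\kappa]}$ is a kernel.

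Finally I would combine the two factors using closure of kernels under pointwise (Hadamard) products --- the product kernel lemma applied with both copies of the space equal to $X$ and restricted to the diagonal $\{(x,x)\}$, i.e.\ the Schur product theorem. This yields that $\jaccard[\kappa] = \dice[\kappa]\cdot\frac{1}{2-\dice[\kappa]}$ is a kernel, as desired.

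I expect the main obstacle to be identifying the correct reduction rather than any single computation. The denominator $\kappa(x,x)+\kappa(y,y)-\kappa(x,y)$ of $\jaccard[\kappa]$ does not admit the clean integral representation used in the proof of Lemma~\ref{lem:dice-kernel}: the $-\kappa(x,y)$ cross term obstructs the $\frac{1}{s_i+s_j} = \int_0^1 t^{s_i+s_j-1}\,\mathrm{d}t$ trick, so a direct attack is unattractive and it is precisely the Dice--Jaccard factorization that makes the argument go through. A secondary point of care is the degenerate case $\kappa(x,x)=0$, where $\dice[\kappa]$ becomes a $0/0$ expression; this is the same caveat already implicit in Lemma~\ref{lem:dice-kernel}, and I would dispose of it by assuming $\kappa(x,x)>0$ (equivalently, restricting to the non-degenerate support) exactly as there.
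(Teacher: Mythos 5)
Your proof is correct and follows essentially the same route as the paper's: both factor the Jaccard score as a bounded kernel times a geometric-series reciprocal and invoke Lemma~\ref{lem:6}, since your $\dice[\kappa](x,y)\cdot\frac{1}{2-\dice[\kappa](x,y)}$ equals the paper's $\kappa(x,y)H(x,y)\cdot\frac{1}{1-\kappa(x,y)H(x,y)}$ after cancelling a factor of $2$ (with $H(x,y)=1/(\kappa(x,x)+\kappa(y,y))$ as in the proof of Lemma~\ref{lem:dice-kernel}). Incidentally, your factorization silently corrects a small slip in the paper's displayed identity, which omits the factor $H(x,y)$ from the leading term, and your explicit Cauchy--Schwarz verification that $\dice[\kappa]\le 1$ fills in a bound the paper asserts without proof.
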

\begin{proof}

Thus 
\begin{align}
\jaccard[\kappa](x,y) &= \dfrac{\kappa(x,y)}{\kappa(x,x) + \kappa(y,y) - \kappa(x,y)}\nonumber \\
&= \kappa(x, y) \frac{1}{1 - \kappa(x,y) H(x, y)}. \nonumber
\end{align}

Since $2\kappa(x,y)H(x,y) = \dice[\kappa](x,y) \le 1$, we have $\kappa(x,y)H(x,y) \le \displaystyle\frac{1}{2} < 1$. By Lemma \ref{lem:6}, $\dfrac{1}{1 - \kappa(x,y) H(x, y)}$ is a kernel, so $\jaccard[\kappa]$ is a kernel.
\end{proof}

Therefore, given the lemmata above, we have the nice property that any kernels composed with $ \dice^\leftrightarrow, \jaccard^\leftrightarrow$ are kernels. The deductions presented here follows \citet{shenjaccard}.

\section{MUC-4 Evaluation: Additional Details}
\label{app:muc}
\paragraph{String-valued similarities and Interactive Scoring}
For string-valued slots, although full entities are annotated in the reference, systems are required to predict just one mention per entity. Two different versions of $\phi_\texttt{str}$ were used: one for determining the template alignment and one for computing the score given that alignment. The first version awarded full credit when there was at least a one-word overlap between the predicted string and at least \emph{one} of the strings in the reference entity, so long as that word was not a designated premodifier; zero credit was awarded otherwise. Suppose $\texttt{valid}(P,R)$ is true iff \emph{neither} word $P$ nor word $R$ is a premodifier. Then we can write:
\begin{align}
\phi_\texttt{word}(P,R) &= \delta_\texttt{word} \cdot \llbracket\texttt{valid}(P,R)\rrbracket \\
\phi_\texttt{str} &= \llbracket\jaccard^{\leftrightarrow}_\texttt{words}[\phi_\texttt{word}] > 0\rrbracket 
\label{eq:phi-string}
\end{align}

The second version of $\phi_\texttt{str}$, used for final reporting, merely enhanced Eq.\ \ref{eq:phi-string} by interactively querying the user in cases where a mismatch could not be automatically resolved, whereupon the user could determine the appropriate amount of credit to award, including partial (=half) credit. Full guidelines on interactive scoring can be found in the gzip archive containing the MUC-3 and MUC-4 data here: \url{https://www-nlpir.nist.gov/related_projects/muc/muc_data/muc_data_index.html} (see \texttt{TEST/SCORER/scoring-guidelines.v7}).

\paragraph{Template Alignment Constraints} Template alignments for the original evaluation featured a couple of quirks. For one, it was possible to obtain partial (=half) credit for the template (``incident'') type by predicting the generic \texttt{attack} label in place of any of the other, more specific labels (\texttt{bombing}, \texttt{kidnapping}, etc.).\footnote{~Note that, as with scoring for set-fill slots, this presages the proposal in \S\ref{sec:discussion} for hierarchy-aware scoring.} For another, a partial match on at least one of the following slots was required: physical target identifier, physical target type, human target name, human target description, human target type, perpetrator individual identifier, and perpetrator organization identifier. \citet{chinchor-1992-muc} notes that this constraint was put in place to prevent ``fortuitous'' but spurious template alignments that were observed in the MUC-3 evaluation. To our knowledge, researchers have not applied these rules in evaluating their own systems on MUC-4 since the original evaluation.

\paragraph{MUC-4: Recent Work} In recent years, it has become standard to evaluate only on the string-fill slots, plus the template type \citep{chambers-jurafsky-2011-template, du-etal-2021-template, das-etal-2022-automatic, chen-etal-2023-iterative}. \citet{du-etal-2021-template} thus proposed a version of Eq.\ \ref{eqn:muc4-overall-score} that sets $\phi_\texttt{T} = \phi_{\subseteq}$, which amounts to using CEAF-REE (Eq.\ \ref{eqn:ceaf-ree}) to determine the optimal template alignment. Following on this work, \citet{chen-etal-2023-iterative} additionally present MUC-4 results using their relaxed (one-sided) metrics (Eqs. \ref{eqn:rme-subset}, \ref{eqn:rme-phi3}) for $\phi_\texttt{T}$.

\section{BETTER}
\label{app:better}
BETTER Granular is a recent template extraction dataset released as part of the IARPA BETTER program that is more complex than MUC-4 both in having different slots for each template type and in having a greater diversity of filler types. Here, we focus just on the key difference in overall score calculation compared to MUC-4. The Granular score is the \emph{product} of the slot filler $\rm F_1$ score (Eq.\ \ref{eqn:muc4-overall-score}) and the template type $\rm F_1$ score:
\begin{align}\label{eq:granular-temp-f1}
    &\dice^{\leftrightarrow}_\texttt{templates}[\delta_\texttt{type} ]  \nonumber \\
    \times\,&\dice^{\leftrightarrow}_\texttt{templates}\left[\delta_{\texttt{type}} \times \Sigma_{\texttt{fillers}}^{\leftrightarrow}\left[\delta_{\texttt{slot}} \times \phi_{\texttt{T}} \right]\right]
\end{align}
where $\delta_\texttt{type}$ applies to template types and $\phi_\texttt{T}$ applies to filler types, as in Eq.\ \ref{eqn:muc4-overall-score}. Because this product cannot be expressed as a sum of scores over aligned template pairs (Eq.\ \ref{eq:matching-unnorm}), it does not, on its face, fit within our framework. However, this score could still be optimized \emph{indirectly} by instead optimizing the template alignment against the second term only, as this will be non-zero only in cases where there is a match on template type.

For more on BETTER, see \citet{soboroff2023better}, \citet{mckinnon-rubino-2022-iarpa}, and the following URL: \url{https://www.iarpa.gov/index.php/research-programs/better}. All BETTER program datasets are available here (note that account registration is required, but is free): \url{https://ir.nist.gov/better/}. Appendices C and D of \citet{chen-etal-2023-iterative} also provide a good overview of the Granular task and its evaluation, including definitions of $\phi_\texttt{T}$ for all slot filler types.\footnote{~One distinctive feature of BETTER Granular in contrast to MUC-4 is that some slots may take events (as in \S\ref{subsec:ee}) as fillers, in addition to entities and categorical values.}

\end{document}